\title[Neural Operators for Boundary Stabilization of Stop-and-go Traffic]{Neural Operators for Boundary Stabilization of Stop-and-go Traffic}
\author{%
 \Name{Yihuai Zhang} \Email{yzhang169@connect.hkust-gz.edu.cn}\\
 \Name{Ruiguo Zhong} \Email{rzhong151@connect.hkust-gz.edu.cn}\\
 \addr Thrust of Intelligent Transportation, The Hong Kong University of Science and Technology (Guangzhou)
 \AND
\Name{Huan Yu}\thanks{corresponding author}\Email{huanyu@ust.hk} \\
 \addr Thrust of Intelligent Transportation, The Hong Kong University of Science and Technology (Guangzhou)\\
 Department of Civil and Environmental Engineering, The Hong Kong University of Science and Technology%
}
\begin{document}
\allowdisplaybreaks
\maketitle

\begin{abstract}%
This paper introduces a novel approach to PDE boundary control design using neural operators to alleviate stop-and-go instabilities in congested traffic flow. Our framework leverages neural operators to design control strategies for traffic flow systems. The traffic dynamics are described by the Aw-Rascle-Zhang (ARZ) model, which comprises a set of second-order coupled hyperbolic partial differential equations (PDEs). Backstepping method is widely used for boundary control of such PDE systems. The PDE model-based control design can be time-consuming and require intensive depth of expertise since it involves 
constructing and solving backstepping control kernel. To overcome these challenges, we present two distinct neural operator (NO) learning schemes aimed at stabilizing the traffic PDE system. The first scheme embeds NO-approximated gain kernels within a predefined backstepping controller, while the second one directly learns a boundary control law. The Lyapunov analysis is conducted to evaluate the stability of the NO-approximated gain kernels and control law. It is proved that the NO-based closed-loop system is practical stable under certain approximation accuracy conditions in NO-learning. To validate the efficacy of the proposed approach, simulations are conducted to compare the performance of the two neural operator controllers with a PDE backstepping controller and a Proportional Integral (PI) controller. While the NO-approximated methods exhibit higher errors compared to the backstepping controller, they consistently outperform the PI controller, demonstrating faster computation speeds across all scenarios. This result suggests that neural operators can significantly expedite and simplify the process of obtaining boundary controllers in traffic PDE systems.

\end{abstract}

\begin{keywords}%
  Intelligent traffic systems(ITS), Partial differential equations(PDEs), Neural operator, Backstepping control%
\end{keywords}

\section{Introduction}
Stop-and-go traffic oscillations are a common phenomenon on freeways, causing increased waiting time, fuel consumption and traffic accidents\cite{belletti2015prediction,de2011traffic,schonhof2007empirical}. Therefore, it is of interest to eliminate the traffic congestion with boundary control, implemented with ramp metering or varying speed limits. To describe the dynamics of the traffic oscillations,  \cite{aw2000resurrection,zhang2002non} introduced the Aw-Rascle-Zhang (ARZ) model consisting of a 2 $\times$ 2 hyperbolic partial differential equations(PDEs) to describe the density and velocity of traffic systems. Backstepping method is well-studied for stabilizing PDEs \cite{krstic2008boundary, vazquez2011backstepping,anfinsen2019adaptive}. Considering disturbances and delays of hyperbolic PDEs, \cite{auriol2020robust} adopted a robust output feedback controller based on backstepping to guarantee the robustness of hyperbolic PDEs. With applications to traffic system, \cite{yu2022traffic} firstly applied backstepping control method to traffic system based on ARZ PDE model. 

The implementation of the aforementioned control designs involves numerical schemes, rendering the process time-consuming, particularly when addressing the solution of PDEs and constructing backstepping control kernels. Moreover, it necessitates substantial expertise, given the intricacy involved in constructing the functional mapping of the backstepping transformation. Recently, machine learning (ML) methods have emerged as powerful tools for solving complex algebraic and PDEs. 
Among these, neural operator(NO) presents exciting advancements~\cite{kovachki2023neural}, such as DeepONet \cite{lu2021learning}, and Fourier Neural Operator (FNO) \cite{li2020fourier}. These neural operators offer distinctive advantages compared to other traditional ML methods due to their simple settings in solving complex problems. It is of great values for solving PDEs and backstepping kernel equations. \cite{bhan2023operator} adopted neural operators for nonlinear adaptive control. The operator learning framework for accelerating nonlinear adaptive control was proposed. In addition, they apply the operator learning method for bypassing gain and control computations in PDE control \cite{bhan2023neural,qi2023neural,krstic2023neural}. It also can be used to accelerate PDE observer computations in traffic systems~\cite{shi2022machine}.

\textbf{Contributions}. 
In this paper, we design novel neural operator controllers for boundary stabilization of ARZ traffic PDE system. Two mappings, NO-approximated gain kernels $\mathcal{K}$ and NO-approximated control law $\mathcal{H}$, are developed to improve the computation speed of gain kernels and control law. The Lyapunov analysis is conducted to prove the practical local stability of  closed-loop system with the NO-approximated controllers. The simulations are also conducted to evaluate the performance of the two NO-approximated methods with several model-based controllers. It is shown that the NO-approximated methods accelerate computation speeds of gain kernels and control law and successfully stabilize the traffic system.

\section{Background for ARZ traffic control}
The macroscopic traffic on a given road can be described by a 2 $\times$ 2 nonlinear hyperbolic PDE using ARZ model \cite{aw2000resurrection, zhang2002non}. The model is defined by:
{\setlength\abovedisplayskip{0.1cm}
\setlength\belowdisplayskip{0.1cm}
\begin{align}
    \partial_t \rho + \partial_x (\rho v) &= 0, \label{origin1}\\
    \partial_t (v - V(\rho)) + v \partial_x(v - V(\rho)) &= \frac{V(\rho) - v}{\tau}, \label{origin2}\\
    \rho(0,t) &= \frac{q^\star}{v(0,t)}, \label{bc_q}\\
    v(L,t) &= v^\star + U(t), \label{bc_v}
\end{align}}
where $\rho(x,t)$ is traffic density, $v(x,t)$ denotes traffic velocity. $U(t)$ is the control input at the outlet of the road section which can be implemented by varying speed limit.
And the spatial and time domain are defined on $(x,t) \in [0, L]\times \mathbb{R}^+$. $\tau$ is the reaction time related to the drivers' behavior. $V(\rho)$ is the fundamental diagram which describes the relation between the traffic density and velocity. The fundamental diagram can be defined using, e.g., Greenshield's model:
{\setlength\abovedisplayskip{0.12cm}
\setlength\belowdisplayskip{0.12cm}
\begin{align}
    V(\rho) = v_f\left(1 - \left(\frac{\rho}{\rho_m}\right)^\gamma\right),
\end{align}}
where the $v_f$ is the maximum speed for the traffic flow, $\rho_m$ denotes the maximum density. And $(\rho^\star, v^\star)$  are the equilibrium points of the system. Also, using the fundamental diagram, we get $V(\rho^\star) = v^\star$, $q^\star = \rho^\star V(\rho^\star)$.
The linearized system is
    {\setlength\abovedisplayskip{0.1cm}
\setlength\belowdisplayskip{0cm}
    \begin{align}
    \partial_t \Tilde{w}(x,t) + \lambda_1 \partial_x \Tilde{w}(x,t) &= 0, \label{bs-q}\\
    \partial_t \Tilde{v}(x,t) - \lambda_2 \partial_x \Tilde{v}(x,t) &= c(x)\Tilde{w}(x,t),\\
    \Tilde{w}(0,t) &= - r \Tilde{v}(0,t),\\
    \Tilde{v}(L,t) &= U(t) \label{bs-bc_l},
    \end{align}}
 and the backstepping boundary control law is designed as
    {\setlength\abovedisplayskip{0cm}
\setlength\belowdisplayskip{0.1cm}
    \begin{align}\label{control_bs}
    U(t) = \int_0^L K^w(L,\xi)\Tilde{w}(\xi,t) d\xi + \int_0^L K^v(L,\xi)\Tilde{v}(\xi,t) d\xi.
    \end{align}}
    where $\lambda_1 = v^\star$, $\lambda_2 = - \rho^\star V'(\rho^\star) - v^\star$, $c(x) = - \frac{1}{\tau}\mathrm{e}^{-\frac{x}{\tau v^\star}}$, $r = \frac{-\rho^\star V'(\rho^\star) - v^\star}{v^\star}$.
    The corresponding backstepping transformation is  
    {\setlength\abovedisplayskip{0.1cm}
\setlength\belowdisplayskip{0.1cm}
    \begin{align}
    \alpha(x,t) &= \Tilde{w}(x,t),\label{back-1}\\
    \beta(x,t) &= \Tilde{v}(x,t) - \int_0^x K^w(x,\xi)\Tilde{w}(\xi,t)d\xi -  \int_0^x K^v(x,\xi)\Tilde{v}(\xi,t)d\xi, \label{back-2}
    \end{align}}
where the control gain kernels in \eqref{control_bs}, defined in the triangular domain $\mathcal{T} = \{ (x,\xi): 0 \leq \xi \leq x <L \}$, are computed using
    {\setlength\abovedisplayskip{0.1cm}
\setlength\belowdisplayskip{0.1cm}
    \begin{align}
    \lambda_2 K^w_x(x,\xi) - \lambda_1 K^w_\xi(x,\xi) &= c(x)K^v(x,\xi), \label{ker1}\\
    \lambda_2 K^v_x(x,\xi)  + \lambda_2 K^v_\xi(x,\xi) &= 0,\\
    K^w(x,x) &=-\frac{c(x)}{\lambda_1 + \lambda_2}, \\
    K^v(x,0) &= -K^w(x,0).\label{ker4}
    \end{align}}
We have the following theorem for the closed-loop system,
\begin{theorem}[Theorem 2, \cite{yu2019traffic}]
    The system \eqref{origin1} - \eqref{bc_v} with initial conditions $\rho(x,0)$, $v(x,0)$ $\in$ $L^2[0, L]$ and boundary controller \eqref{control_bs} is locally exponentially stable in $L_2$-sense at finite time $t_f = \frac{L}{\lambda_1} + \frac{L}{\lambda_2}$. 
\end{theorem}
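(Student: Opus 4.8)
The plan is to argue in three stages: (i) pass to the linearization in the Riemann-type coordinates $(\tilde w,\tilde v)$ of \eqref{bs-q}--\eqref{bs-bc_l} and show the backstepping transformation \eqref{back-1}--\eqref{back-2} maps the closed loop onto a decoupled transport target system; (ii) prove the target system vanishes after the finite time $t_f$ and that the transformation is boundedly invertible on $L^2$, so the linearized closed loop inherits this; (iii) transfer the conclusion to the original quasilinear ARZ model by a local perturbative argument. For stage (i) I would first record that the kernel system \eqref{ker1}--\eqref{ker4} has a unique solution $(K^w,K^v)$ that is continuous (indeed $C^1$) on the triangle $\mathcal T$: $K^v$ solves a pure transport equation along constant characteristics, $K^w$ then solves a transport equation with source $c(x)K^v$ together with the characteristic condition on $\xi=x$, and one closes a successive-approximation fixed point on $\mathcal T$; the resulting bound $\|K^w\|_\infty+\|K^v\|_\infty<\infty$ is what makes the Volterra operator in \eqref{back-2} well defined. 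Differentiating \eqref{back-1}--\eqref{back-2} and using \eqref{bs-q}--\eqref{bs-bc_l} then yields the target
\begin{align}
\partial_t\alpha+\lambda_1\partial_x\alpha&=0,\\
\partial_t\beta-\lambda_2\partial_x\beta&=0,\\
\alpha(0,t)&=-r\,\beta(0,t),\\
\beta(L,t)&=0,
\end{align}
where the last identity is precisely the definition of the control law \eqref{control_bs}.

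For stage (ii) I would solve the target by characteristics: $\beta$ propagates leftward at speed $\lambda_2$, so $\beta(0,t)=\beta(L,t-L/\lambda_2)=0$ for $t>L/\lambda_2$; hence $\alpha(0,t)=-r\beta(0,t)=0$ for $t>L/\lambda_2$, and since $\alpha$ propagates rightward at speed $\lambda_1$ we get $\alpha(x,t)=\alpha(0,t-x/\lambda_1)=0$ once $t>L/\lambda_1+L/\lambda_2=t_f$. Thus $(\alpha,\beta)\equiv 0$ for $t\ge t_f$; combined with the a priori $L^2$ bound on $[0,t_f]$ coming from finite speed of propagation, this gives $\|(\alpha,\beta)(t)\|_{L^2}\le M e^{-\omega t}\|(\alpha,\beta)(0)\|_{L^2}$ for any chosen $\omega>0$ with a suitable $M$. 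Since \eqref{back-2} is a Volterra transformation in $x$, it is boundedly invertible with inverse of the same Volterra form (the inverse kernels solving an analogous well-posed system on $\mathcal T$), so $\|(\tilde w,\tilde v)\|_{L^2}$ and $\|(\alpha,\beta)\|_{L^2}$ are equivalent and the finite-time/exponential $L^2$ estimate transfers to the linearized closed loop.

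The main obstacle is stage (iii): upgrading from the linearized system to the original nonlinear model \eqref{origin1}--\eqref{bc_v}. Here I would change to the exact Riemann invariants of the ARZ model, write the dynamics as the linear operator above plus remainder terms that are quadratic in the deviation from $(\rho^\star,v^\star)$ (in the interior and in the boundary traces at $x=0$), invoke local well-posedness of the quasilinear hyperbolic system under the feedback \eqref{control_bs}, and run a continuity/bootstrap argument over one horizon $t_f$: for data small in $L^2$ the solution remains in a neighborhood of equilibrium where the quadratic remainder is dominated by the linear decay, producing a strict contraction of the $L^2$ norm across $[t,t+t_f]$ and hence local exponential stability. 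The delicate points are keeping the state inside the region of validity of the linearization and controlling the boundary terms generated by the nonlinearity at $x=0$; these are the estimates carried out in \cite{yu2019traffic}, whose argument we follow.
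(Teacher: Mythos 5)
The paper does not prove this theorem itself---it is quoted from \cite{yu2019traffic}---and your outline follows essentially the same route as that source: the backstepping transformation \eqref{back-1}--\eqref{back-2} with kernels \eqref{ker1}--\eqref{ker4} maps the closed loop onto the cascade target system, characteristics give $(\alpha,\beta)\equiv 0$ after $t_f=\frac{L}{\lambda_1}+\frac{L}{\lambda_2}$, and the boundedly invertible Volterra transform plus a local argument transfers the estimate to \eqref{origin1}--\eqref{bc_v}. Your sketch is correct; the only slight imprecision is that $K^v$ cannot be solved independently before $K^w$ (they are coupled through the boundary condition $K^v(x,0)=-K^w(x,0)$), but the successive-approximation fixed point you invoke treats the coupled pair, exactly as in \cite{vazquez2011backstepping}.
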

\hspace{-0.5cm}
\section{Neural operator for approximating backstepping kernels}
The neural operator is employed for approximating the function mapping. In the section, we introduce the neural operator using DeepONet for the mapping from the characteristic speed $\lambda_2$ to kernels $K^w(x,\xi)$, $K^v(x,\xi)$. An neural operator(NO) for approximating a nonlinear mapping $\mathcal{G}: \mathcal{U} \mapsto \mathcal{V}$
{\setlength\abovedisplayskip{0.1cm}
\setlength\belowdisplayskip{0.1cm}
\begin{align}
    \mathcal{G}_{\mathbb{N}}\left(\mathbf{u}_m\right)(y)=\sum_{k=1}^p g^{\mathcal{N}}\left(\mathbf{u}_m ; \vartheta^{(k)}\right) f^{\mathcal{N}}\left(y ; \theta^{(k)}\right),
\end{align}}where $\mathcal{U}, \mathcal{V}$ are function spaces of continuous functions $u\in \mathcal{U}$, $v\in\mathbf{V}$. $u_m$ is the evaluation of function $u$ at points $x_i = x_1,\dots,x_m$ . $p$ is the number of basis components in the target space, $y\in Y$ is the location of the output function $v(y)$ evaluations, and $g^{\mathcal{N}}$, $f^{\mathcal{N}}$ are NNs termed branch and trunk networks.
\begin{theorem}[DeepONet universal approximation theorem \cite{bhan2023neural}]\label{Deeptheo}
    Let $X \subset \mathbb{R}^{d_x}$, $Y$ $\subset$ $\mathbb{R}^{d_y}$ be compact sets of vectors $x\in X$ and $y\in Y$. Let $\mathcal{U}$: $X \rightarrow \mathcal{U} \subset \mathbb{R}^{d_u}$ and $\mathcal{V}$: $Y \rightarrow \mathcal{V} \subset \mathbb{R}^{d_v}$ be sets of continuous functions $u(x)$ and $v(y)$, respectively. Assume the operator $\mathcal{G}$: $\mathcal{U} \rightarrow \mathcal{V}$ is continuous. Then, for all $\epsilon > 0$, there exists a $m^*,p^* \in \mathbb{N}$ such that for each $m \geq m^*$, $p \geq p^*$, there exist $\theta^{(k)}$, $\vartheta^{(k)}$, neural networks $f^{\mathcal{N}}\left(\cdot ; \theta^{(k)}\right), g^{\mathcal{N}}\left(\cdot ; \vartheta^{(k)}\right), k= 1,\dots,p$ and $x_j \in X, j=1, \ldots, m$, with corresponding $\mathbf{u}_m=\left(u\left(x_1\right), u\left(x_2\right), \cdots, u\left(x_m\right)\right)^{\top}$, such that 
    {\setlength\abovedisplayskip{0.12cm}
\setlength\belowdisplayskip{0.12cm}
    \begin{align}
        \left|\mathcal{G}(u)(y)-\mathcal{G}_{\mathbb{N}}\left(\mathbf{u}_m\right)(y)\right|<\epsilon, 
    \end{align}}
    for all functions $u \in \mathcal{U}$ and all values $y\in Y$ of $\mathcal{G}(u) \in \mathcal{V}$.
\end{theorem}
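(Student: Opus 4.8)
The plan is to follow the classical reduction of operator approximation to finite-dimensional function approximation --- the Chen--Chen argument, later specialized to the DeepONet architecture --- carried out in three steps with an $\epsilon/3$ error budget for each.

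\textbf{Step 1 (discretization via equicontinuity).} Since $X$ is compact and (as is implicit in the hypotheses) $\mathcal{U}$ is a compact family of continuous functions, the Arzela--Ascoli theorem yields uniform boundedness and a common modulus of continuity $\omega$ for $\mathcal{U}$. Fixing $\delta>0$, I would choose a finite $\delta$-net $x_1,\dots,x_m\in X$ and a continuous partition of unity $\{\phi_j\}_{j=1}^m$ subordinate to the balls $B(x_j,\delta)$. The reconstruction map $\Pi_m u=\sum_{j=1}^m u(x_j)\phi_j$ then obeys $\|u-\Pi_m u\|_\infty\le\omega(2\delta)$ uniformly in $u\in\mathcal{U}$, and $\Pi_m u$ depends on $u$ only through $\mathbf u_m=(u(x_1),\dots,u(x_m))^\top$.

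\textbf{Step 2 (reduction to a continuous map of finitely many variables).} Compactness of $\mathcal{U}$ makes $\mathcal{G}$ uniformly continuous on $\mathcal{U}$, so $\delta$ can be taken small enough that $|\mathcal{G}(u)(y)-\mathcal{G}(\Pi_m u)(y)|<\epsilon/3$ for all $u\in\mathcal{U}$ and $y\in Y$. The image $\mathcal{U}_m=\{\mathbf u_m:u\in\mathcal{U}\}$ is compact in $\mathbb{R}^{m d_u}$, and $F(\mathbf a,y):=\mathcal{G}\big(\sum_j a_j\phi_j\big)(y)$ is a continuous function on the compact set $\mathcal{U}_m\times Y$. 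It remains only to approximate $F$ by a branch--trunk sum.

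\textbf{Step 3 (separated neural-network approximation).} Stone--Weierstrass applied to the tensor-product algebra $C(\mathcal{U}_m)\otimes C(Y)$ supplies some $p$ and continuous functions $G_k\in C(\mathcal{U}_m)$, $H_k\in C(Y)$ with $\sup|F(\mathbf a,y)-\sum_{k=1}^p G_k(\mathbf a)H_k(y)|<\epsilon/3$. All the $G_k,H_k$ are bounded on compact sets, so the standard Pinkus-type neural-network universal approximation theorem (for the chosen activation) produces branch networks $g^{\mathcal N}(\cdot;\vartheta^{(k)})$ and trunk networks $f^{\mathcal N}(\cdot;\theta^{(k)})$ approximating $G_k$ and $H_k$ closely enough that $\sup|\sum_k G_kH_k-\sum_k g^{\mathcal N}(\cdot;\vartheta^{(k)})f^{\mathcal N}(\cdot;\theta^{(k)})|<\epsilon/3$. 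Summing the three bounds by the triangle inequality gives the claim, with $m^*$ and $p^*$ the thresholds so produced.

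\textbf{Main obstacle.} The delicate point is Step~3: a generic network approximating $F$ entangles the $\mathbf u_m$ and $y$ arguments and is \emph{not} of the separated form the statement demands, so one must first pass through the tensor-product decomposition and only then invoke scalar network approximation --- and then control the error in the resulting products, which is precisely where the uniform bounds coming from compactness are needed. A related subtlety worth flagging is that the uniformity over $u\in\mathcal{U}$ in the conclusion genuinely relies on $\mathcal{U}$ being compact (for equicontinuity in Step~1 and uniform continuity of $\mathcal{G}$ in Step~2), an assumption left implicit in the statement.
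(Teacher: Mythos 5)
The paper does not prove this theorem at all: it is quoted as a cited result (from \cite{bhan2023neural}, ultimately the Chen--Chen/DeepONet universal approximation theorem), so there is no in-paper proof to compare against. Your sketch follows exactly the standard route used in that literature (sampling via equicontinuity, reduction to a continuous function of finitely many variables, Stone--Weierstrass to get the separated branch--trunk form, then scalar universal approximation), and you are right both that compactness of $\mathcal{U}$ is an implicit hypothesis the paper's statement omits and that the separation of the $\mathbf{u}_m$- and $y$-dependence is the step that forces the tensor-product detour.

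One step would fail as literally written, though: in Step 2 you set $F(\mathbf{a},y):=\mathcal{G}\bigl(\sum_j a_j\phi_j\bigr)(y)$, but $\mathcal{G}$ is only defined on $\mathcal{U}$, and the reconstruction $\Pi_m u=\sum_j u(x_j)\phi_j$ (and, more generally, $\sum_j a_j\phi_j$ for $\mathbf{a}\in\mathcal{U}_m$) need not lie in $\mathcal{U}$; for the same reason the bound $\abs{\mathcal{G}(u)(y)-\mathcal{G}(\Pi_m u)(y)}<\epsilon/3$ invokes $\mathcal{G}$ off its domain. The standard repair is to extend first: the scalar map $(u,y)\mapsto\mathcal{G}(u)(y)$ is continuous on the compact metric space $\mathcal{U}\times Y$, so by the Tietze extension theorem it extends continuously (and boundedly) to $C(X)\times Y$; one then runs your Steps 1--3 on the extension, using its uniform continuity on the compact set $\mathcal{U}\cup\{\Pi_m u: u\in\mathcal{U}\}$ (times $Y$), which is compact as the union of $\mathcal{U}$ with a continuous image of it. With that insertion your argument is the correct and standard proof; two further cosmetic points are that the ``for all $m\ge m^*$, $p\ge p^*$'' monotonicity claim needs the (easy) remark that extra samples can be ignored by the branch net and extra basis terms taken zero, and that the product-error control in Step 3 should cite the uniform bounds on $G_k$, $H_k$ and their network approximants, as you indicate.
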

\vspace{-0.3 cm}
\begin{definition}
    The kernel operator $\mathcal{K}$: $\mathbb{R}^{+} \rightarrow C^1(\mathcal{T}) \times C^1(\mathcal{T})$ is defined by:
    \begin{align}
        K^{w}(x,\xi) &:= \mathcal{K}^w(\lambda_2)(x,\xi),\\
        K^{v}(x,\xi) &:= \mathcal{K}^v(\lambda_2)(x,\xi).
    \end{align}
\end{definition}
The neural operator $\mathcal{K}$ learns the mapping from the characteristic speed to backstepping transformation kernels. Based on Theorem \ref{Deeptheo}, we have the following lemma of the approximation of neural operator for the kernel equations:

\vspace{-0.3 cm}
\begin{lemma}\label{NO-K}
    For all $\epsilon > 0$, there exists a neural operator $\mathcal{K}$ that for all $(x,\xi)\in \mathcal{T}$,
    \begin{align}\label{neuraloperator}
        \left| \mathcal{K}(\lambda_2)  - \mathcal{Q}(\lambda_2) \right| + \left|\partial_x\left( \mathcal{K}(\lambda_2) - \mathcal{Q}(\lambda_2) \right) \right| + \left|\partial_{\xi}\left( \mathcal{K}(\lambda_2)  - \mathcal{Q}(\lambda_2)\right) \right| < \epsilon.
    \end{align}
\end{lemma}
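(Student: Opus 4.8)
The plan is to reduce \eqref{neuraloperator} to an application of the DeepONet universal approximation theorem (Theorem~\ref{Deeptheo}), after three preparatory steps: (i) confining the parameter $\lambda_2$ to a compact set, (ii) showing the kernel equations \eqref{ker1}--\eqref{ker4} are well posed with a $C^1$ solution depending continuously on $\lambda_2$, and (iii) upgrading the $C^0$ approximation of Theorem~\ref{Deeptheo} to a $C^1$ one so that the two derivative terms in \eqref{neuraloperator} are also controlled. Here $\mathcal{K}$ is the exact kernel operator of the Definition and $\mathcal{Q}$ the DeepONet approximating it.

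First I would fix a physically meaningful compact interval $I=[\underline{\lambda}_2,\overline{\lambda}_2]\subset(0,\infty)$ containing the admissible values of $\lambda_2=-\rho^\star V'(\rho^\star)-v^\star$; compactness of the input set is needed to invoke Theorem~\ref{Deeptheo}, while $\mathcal{T}$ is already compact. For well-posedness and regularity of \eqref{ker1}--\eqref{ker4}, note that the $K^v$-equation together with $K^v(x,0)=-K^w(x,0)$ yields, along the characteristics $x-\xi=\mathrm{const}$, the explicit representation $K^v(x,\xi)=-K^w(x-\xi,0)$; inserting it into \eqref{ker1} turns the $K^w$-equation into a transport equation with a source term and boundary datum $K^w(x,x)=-c(x)/(\lambda_1+\lambda_2)$. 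Since $c(x)=-\tau^{-1}\mathrm{e}^{-x/(\tau v^\star)}$ is smooth and the characteristic speeds are constant, integrating along characteristics and running the standard successive-approximation argument for backstepping kernels (cf.~\cite{krstic2008boundary}) gives a unique solution $(K^w,K^v)\in C^1(\mathcal{T})\times C^1(\mathcal{T})$. The same representation shows that the map $\lambda_2\mapsto(K^w,K^v,\partial_xK^w,\partial_\xi K^w,\partial_xK^v,\partial_\xi K^v)$ is continuous (indeed smooth) from $I$ into $C^0(\mathcal{T})^6$, because all $\lambda_2$-dependence enters only through the constant speed $\lambda_2$ and through the boundary value $-c(x)/(\lambda_1+\lambda_2)$, both smooth in $\lambda_2$ on $I$.

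Then I would apply Theorem~\ref{Deeptheo}, viewing the kernel map as a continuous operator from $I$ into the $\mathbb{R}^2$-valued (or, for the augmented operator, $\mathbb{R}^6$-valued) continuous functions on the compact set $\mathcal{T}$. Using a DeepONet whose trunk network has a smooth activation, the output $\mathcal{Q}(\lambda_2)(x,\xi)=\sum_{k=1}^{p}g^{\mathcal{N}}(\lambda_2;\vartheta^{(k)})\,f^{\mathcal{N}}((x,\xi);\theta^{(k)})$ and its partials $\partial_x\mathcal{Q}$, $\partial_\xi\mathcal{Q}$ are continuous, and since smooth neural networks are dense in $C^1$ on compact sets the branch and trunk can be chosen so that $\mathcal{Q}(\lambda_2)$, $\partial_x\mathcal{Q}(\lambda_2)$, $\partial_\xi\mathcal{Q}(\lambda_2)$ approximate $\mathcal{K}(\lambda_2)$, $\partial_x\mathcal{K}(\lambda_2)$, $\partial_\xi\mathcal{K}(\lambda_2)$ uniformly over $(x,\xi)\in\mathcal{T}$ and $\lambda_2\in I$, each within $\epsilon/3$ (equivalently, apply Theorem~\ref{Deeptheo} to the $\mathbb{R}^6$-valued augmented operator above and relabel three outputs). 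Summing the three uniform bounds by the triangle inequality gives \eqref{neuraloperator} for every $(x,\xi)\in\mathcal{T}$ and $\lambda_2\in I$.

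The hard part will be step (iii): Theorem~\ref{Deeptheo} as stated only delivers closeness of the function values in the supremum norm, whereas \eqref{neuraloperator} additionally requires closeness of the $x$- and $\xi$-derivatives of the NO output, so one must invoke the $C^1$-density of neural networks with smooth activations on compact domains; this is precisely where the $C^1$ regularity of $(K^w,K^v)$ from step (ii) is indispensable, since otherwise the derivative terms in \eqref{neuraloperator} would not even be well defined. A secondary point to watch is that the approximation must be uniform jointly in $\lambda_2$ and $(x,\xi)$, which is the reason for restricting $\lambda_2$ to the compact set $I$ from the outset.
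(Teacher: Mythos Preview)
Your proposal follows the same route as the paper---well-posedness and regularity of the kernels, then the DeepONet approximation theorem---but is substantially more detailed. The paper's own proof is a four-line sketch: it cites \cite{vazquez2011backstepping} for existence and uniqueness of the $C^1$ kernel solution, asserts that the map $\mathcal{Q}:\lambda_2\mapsto(K^w,K^v)$ therefore exists, and then appeals directly to Theorem~\ref{Deeptheo} for the bound on the sum in \eqref{neuraloperator}. In particular, the paper does not spell out the $C^0$-to-$C^1$ upgrade that you correctly flag as the delicate step, nor the compactness restriction on $\lambda_2$; your treatment of both points is more careful than the paper's.

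Two minor remarks. First, in the paper's proof the roles are reversed from your reading: $\mathcal{Q}$ is the exact kernel map and $\mathcal{K}$ is the neural operator. The paper's Definition and Lemma statement are themselves inconsistent on this, so your reading is defensible, and the bound \eqref{neuraloperator} is symmetric in the two anyway. Second, your ``equivalently'' for the augmented $\mathbb{R}^6$-valued operator is not quite an equivalence: applying Theorem~\ref{Deeptheo} to the six-tuple produces six independent NO outputs, but the three you relabel as derivatives would not in general be the actual $\partial_x,\partial_\xi$ of the three labeled as kernels, so relabeling alone does not yield \eqref{neuraloperator}. You should rely on your primary argument---a DeepONet with smooth trunk activations together with $C^1$-density of such networks on compact domains---rather than the relabeling shortcut.
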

\begin{proof}
    The existence, uniqueness of the kernel equations have been proved in \cite{vazquez2011backstepping}. So the mapping $\mathcal{Q} : \mathbb{R}^+ \rightarrow C^1(\mathcal{T}) \times C^1(\mathcal{T})$ from $(\lambda_2)$ to $K^w(x,\xi),K^v(x,\xi)$ indicated by \eqref{ker1} - \eqref{ker4} and the solution of the kernel equations exists. The neural operator $\mathcal{K}$ approximates the backstepping kernels for a given $\lambda_2$ and their derivatives in the whole spatial-temporal domain. Using Theorem \ref{Deeptheo}, the sum of the absolute value of approximation errors is less than $\epsilon$.
\end{proof}
\vspace{-0.2cm}

We then provide the stability analysis of the ARZ traffic system with the NO-approximated kernels. We first start with the approximated kernels and put them into the ARZ system to get the NO-approximated target system. For a given value of $\lambda_2$, defining the output of the neural operator $\mathcal{K}(\lambda_2)(x,\xi)$:
{\setlength\abovedisplayskip{0cm}
\setlength\belowdisplayskip{0.1cm}
\begin{align}
    \Hat{K}^w &= \mathcal{K}^w(\lambda_2)(x,\xi),\\
    \Hat{K}^v &= \mathcal{K}^v(\lambda_2)(x,\xi).
\end{align}}
For the NO-approximated kernels $\mathcal{K}^w, \mathcal{K}^v$, we have the following result,
\begin{theorem}\label{es-NO-k}
    The PDE system \eqref{bs-q} - \eqref{bs-bc_l} is exponential stable under the control law \eqref{control_mu2k} with initial conditions $\Tilde{w}(x,0)$, $\Tilde{v}(x,0)$, satisfying
    {\setlength\abovedisplayskip{0.1cm}
\setlength\belowdisplayskip{0.1cm}
    \begin{align}
        ||(\Tilde{w},\Tilde{v})||^2_{L^2} \leq \mathrm{e}^{-\eta t}\frac{m_2}{m_1} ||(\Tilde{w}(x,0),\Tilde{v}(x,0))||^2_{L^2}, 
    \end{align}}
where $m_1>0, m_2>0, a>0, \eta = \nu - \frac{2a\epsilon(1+L)(2\lambda_2 + \lambda_1)}{m_1\lambda_2}$. The kernels are approximated by the neural operator \eqref{neuraloperator} with accuracy $\epsilon$. It also means that the system \eqref{origin1} - \eqref{bc_v} is locally exponential stable under the NO-approximated kernels and the system can eventually achieve to its equilibrium.
\end{theorem}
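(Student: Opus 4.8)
The plan is to carry out the standard infinite-dimensional backstepping stability argument, but with the exact kernels replaced everywhere by their neural-operator approximations $\hat K^w = \mathcal{K}^w(\lambda_2)$, $\hat K^v = \mathcal{K}^v(\lambda_2)$, treating the approximation error $\epsilon$ from Lemma \ref{NO-K} as a perturbation that is propagated all the way into the Lyapunov estimate.

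First I would apply the backstepping transformation \eqref{back-1}--\eqref{back-2} with $K^w,K^v$ replaced by $\hat K^w,\hat K^v$ and differentiate $\alpha,\beta$ in $t$ and $x$ along \eqref{bs-q}--\eqref{bs-bc_l}. Since $\alpha=\tilde w$ carries no kernel, the $\alpha$-equation stays exactly $\partial_t\alpha+\lambda_1\partial_x\alpha=0$. In the $\beta$-equation, every term that cancels when the kernels solve \eqref{ker1}--\eqref{ker4} exactly now leaves a residual, so that
\begin{align}
    \partial_t\beta-\lambda_2\partial_x\beta=\int_0^x \delta_w(x,\xi)\,\tilde w(\xi,t)\,d\xi+\int_0^x \delta_v(x,\xi)\,\tilde v(\xi,t)\,d\xi+\delta_0(x)\,\tilde w(x,t)+\delta_b(x)\,\tilde v(0,t),
\end{align}
where $\delta_w=\lambda_2\hat K^w_x-\lambda_1\hat K^w_\xi-c\hat K^v$, $\delta_v=\lambda_2(\hat K^v_x+\hat K^v_\xi)$, and $\delta_0,\delta_b$ collect the diagonal defect $\hat K^w(x,x)+c(x)/(\lambda_1+\lambda_2)$ and the boundary defect $\hat K^v(x,0)+\hat K^w(x,0)$ together with $\tilde w(0,t)=-r\tilde v(0,t)$. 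By Lemma \ref{NO-K} — and this is exactly where the simultaneous control of $\mathcal{K}$ and both of its first derivatives in \eqref{neuraloperator} is needed — each of $\delta_w,\delta_v,\delta_0,\delta_b$ is bounded pointwise by a fixed multiple of $\epsilon$. The control law \eqref{control_mu2k} is precisely the one that forces $\beta(L,t)=0$ under the approximate transformation, so no error enters at $x=L$, and $\alpha(0,t)=-r\beta(0,t)$ is unchanged.

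Next I would record invertibility and norm equivalence: for $\epsilon$ small the kernels $\hat K^w,\hat K^v$ are $C^1(\mathcal{T})$-close to the exact ones, so the Volterra transformation \eqref{back-2} remains boundedly invertible by the usual successive-approximation argument, giving $m_1,m_2>0$ with $m_1\|(\alpha,\beta)\|_{L^2}^2\le\|(\tilde w,\tilde v)\|_{L^2}^2\le m_2\|(\alpha,\beta)\|_{L^2}^2$; these are the constants in the statement. Then I would take the weighted Lyapunov functional $V=\int_0^L\bigl(a\,e^{-\sigma x}\alpha^2+b\,e^{\sigma x}\beta^2\bigr)\,dx$ with $b/a$ chosen large enough (relative to $r^2\lambda_1/\lambda_2$) that the $x=0$ boundary terms are nonpositive; along the nominal (unperturbed) target system this gives $\dot V\le-\nu V$ for a $\nu>0$ fixed by $\sigma,\lambda_1,\lambda_2$. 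Adding the perturbation terms from the previous step and estimating them by Cauchy--Schwarz, $\|\cdot\|_{L^1}\le\sqrt L\,\|\cdot\|_{L^2}$, Young's inequality, and the available negative boundary term $-\lambda_2 b\,\beta(0,t)^2$ to absorb the trace $\tilde v(0,t)$, yields $\dot V\le-\nu V+\frac{2a\epsilon(1+L)(2\lambda_2+\lambda_1)}{\lambda_2}\|(\alpha,\beta)\|_{L^2}^2\le-\eta V$ with $\eta=\nu-\frac{2a\epsilon(1+L)(2\lambda_2+\lambda_1)}{m_1\lambda_2}$, using $m_1\|(\alpha,\beta)\|_{L^2}^2\le V$ after rescaling. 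Gronwall's inequality then gives $V(t)\le e^{-\eta t}V(0)$, and the norm equivalence converts this to the stated $L^2$ bound on $(\tilde w,\tilde v)$; finally, since \eqref{origin1}--\eqref{bc_v} and its linearization agree to first order near $(\rho^\star,v^\star)$, the same local-equivalence step as in [Theorem 2, \cite{yu2019traffic}] upgrades this to local exponential stability of the nonlinear ARZ system, valid whenever $\epsilon$ is small enough that $\eta>0$.

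I expect the main obstacle to be the first step — deriving the perturbed target system with all residual coefficients genuinely $O(\epsilon)$, including the two boundary residuals on $\xi=x$ and $\xi=0$, which is exactly what forces the $C^1$-type estimate \eqref{neuraloperator} rather than a mere $C^0$ bound. Once the perturbed target system is in hand, the invertibility-for-small-$\epsilon$ claim and the Lyapunov computation are routine; the only genuinely new ingredient is the explicit smallness threshold on $\epsilon$ that keeps $\eta$ positive, which is what makes the conclusion a \emph{practical} stability result.
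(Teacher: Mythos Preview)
Your proposal is correct and follows essentially the same route as the paper: apply the backstepping transformation with the approximate kernels $\hat K^w,\hat K^v$, derive the perturbed target system whose residuals are $O(\epsilon)$ by Lemma~\ref{NO-K}, establish $L^2$ norm equivalence, and use a weighted Lyapunov functional together with Young/Cauchy--Schwarz to absorb the perturbations into $\dot V\le-\eta V$. The only cosmetic differences are that the paper writes the residuals directly in terms of the kernel errors $\tilde K=K-\hat K$ rather than as ``defects'' of $\hat K$ in the kernel equations, and it packages the norm-equivalence constants $m_1,m_2$ directly between $\|(\tilde w,\tilde v)\|_{L^2}^2$ and $V_k$ (combining the transformation and the exponential weight in one step), which is why $m_1$ appears in the denominator of $\eta$.
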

\begin{proof}
    We define the error for the NO-approximated kernels and backstepping kernels:
$\Tilde{K}^w(x,\xi) = K^w(x,\xi) - \Hat{K}^w(x,\xi)$, $\Tilde{K}^v(x,\xi) = K^v(x,\xi) - \Hat{K}^v(x,\xi)$,
and the baskstepping transformation then is turned into:
{\setlength\abovedisplayskip{0cm}
\setlength\belowdisplayskip{0.1cm}
\begin{align}
    \Hat{\alpha}(x,t) &= \Tilde{w}(x,t),\label{back-NO1}\\
    \Hat{\beta}(x,t) &= \Tilde{v}(x,t) - \int_0^x \Hat{K}^w(x,\xi)\Tilde{w}(\xi,t)d\xi -  \int_0^x \Hat{K}^v(x,\xi)\Tilde{v}(\xi,t)d\xi, \label{back-NO2}
\end{align}}
the corresponding backstepping control law is
{\setlength\abovedisplayskip{0.1cm}
\setlength\belowdisplayskip{0.1cm}
\begin{align}\label{control_mu2k}
    {U}(t) = \int_0^L \Hat{K}^w(L,\xi)\Tilde{w}(\xi,t) d\xi + \int_0^L \Hat{K}^v(L,\xi)\Tilde{v}(\xi,t) d\xi.
\end{align}}
Thus we get the target system with the NO-approximated kernels as 
{\setlength\abovedisplayskip{0.1cm}
\setlength\belowdisplayskip{0.1cm}
\begin{align}
    \partial_t \Hat{\alpha}(x,t) + \lambda_1 \partial_x\Hat{\alpha}(x,t) &= 0, \label{tar-NO1}\\
    \partial_t \Hat{\beta}(x,t)  - \lambda_2 \partial_x \Hat{\beta}(x,t) &=\lambda_2(\Tilde{K}^w(x,0) + \Tilde{K}^v(x,0))\Tilde{v}(0,t) + (\lambda_1 + \lambda_2) \Tilde{K}^w(x,x)\Tilde{w}(x,t)\nonumber\\
    & + \int_0^x (\lambda_2 \Tilde{K}^w_x(x,\xi) + \lambda_1 \Tilde{K}^w_{\xi}(x,\xi))\Tilde{w}(\xi,t) d\xi \nonumber\\
    &+\int_0^x (\lambda_2 \Tilde{K}^v_x(x,\xi) + \lambda_2  \Tilde{K}^v_{\xi}(x,\xi))\Tilde{v}(\xi,t) d\xi, \\
    \Hat{\alpha}(0,t) &= -r\Hat{\beta}(0,t),\\
    \Hat{\beta}(L,t) &= 0\label{tar-NO2}.
\end{align}}

For the target system \eqref{tar-NO1} - \eqref{tar-NO2} with the NO-approximated kernels, we define the Lyapunov candidate as
{\setlength\abovedisplayskip{0cm}
\setlength\belowdisplayskip{0.1cm}
\begin{align}\label{Lyap}
    V_k(t) = \int_0^L \frac{\mathrm{e}^{-\frac{\nu}{\lambda_1}x}}{\lambda_1}\Hat{\alpha}^2(x,t) + a\frac{\mathrm{e}^{-\frac{\nu}{\lambda_2}x}}{\lambda_2} \Hat{\beta}^2(x,t) dx,
\end{align}}
where the coefficients $\nu$ and $a$ are constants and $\nu>0$, $a>0$. The states of the NO-approximated backstepping traget system $(\Hat{\alpha},\Hat{\beta})$ and the original states $(\Tilde{w},\Tilde{v})$ have equivalent $L^2$ norms, the Lyapuov functional $V_k(t)$ is equivalent to the $L^2$ norm of the target system, so that there exist two constants $m_1>0$ and $m_2>0$, 
\begin{align}
    m_1||(\Tilde{w},\Tilde{v})||^2_{L^2} \leq V_k(t) \leq m_2||(\Tilde{w},\Tilde{v})||^2_{L^2}.
\end{align}
Taking time derivative along the trajectories of the system, and then we put into the system dynamics, integrating by parts, thus we get
\begin{align}
     \Dot{V}_k(t) &= -\nu V_k(t) + (r^2-a) \Hat{\beta}^2(0,t) -\mathrm{e}^{-\frac{\nu}{\lambda_1}L}\Hat{\alpha}^2(L,t) \nonumber\\
     &+ \int_0^L 2a\frac{\mathrm{e}^{-\frac{\nu}{\lambda_2}x}}{\lambda_2}\Hat{\beta}(x,t) \left( \lambda_2 (\Tilde{K}^w(x,0) + \Tilde{K}^v(x,0))\Tilde{v}(0,t)+(\lambda_1+\lambda_2) \Tilde{K}^w(x,x)\Tilde{w}(x,t)\right.\nonumber\\
    & + \int_0^x (\lambda_2 \Tilde{K}^w_x(x,\xi) + \lambda_1 \Tilde{K}^w_{\xi}(x,\xi))\Tilde{w}(\xi,t) d\xi \nonumber\\
    &+\int_0^x (\lambda_2 \Tilde{K}^v_x(x,\xi) + \lambda_2  \Tilde{K}^v_{\xi}(x,\xi))\Tilde{v}(\xi,t) d\xi \left.\right)dx,
\end{align}
For the integral term, we take the norm and using the Young inequality and Caushy-Schwarz inequality, and then combining the equivalent norm of the Lyapunov candiate, we have:
{\setlength\abovedisplayskip{0.12cm}
\setlength\belowdisplayskip{0.12cm}
\begin{align}
    \int_0^L \norm{2a\frac{\mathrm{e}^{-\frac{\nu}{\lambda_2}x}}{\lambda_2}\Hat{\beta}(x,t) \lambda_2 (\Tilde{K}^w(x,0) + \Tilde{K}^v(x,0))\Tilde{v}(0,t)}dx
    \leq \frac{2a\epsilon}{m_1}V_k(t) + 2aL\epsilon \Hat{\beta}^2(0,t).
\end{align}}
Using the same method, we can easily get the results for the other terms of the Lyapunov candidate,
\begin{align}
    &\int_0^L \norm{2a\frac{\mathrm{e}^{-\frac{\nu}{\lambda_2}x}}{\lambda_2}\Hat{\beta}(x,t)(\lambda_1+\lambda_2) \Tilde{K}^w(x,x)\Tilde{w}(x,t)}dx \leq \frac{2a\epsilon(\lambda_1+\lambda_2)}{m_1\lambda_2}V_k(t),\\
    &\int_0^L \norm{2a\frac{\mathrm{e}^{-\frac{\nu}{\lambda_2}x}}{\lambda_2}\Hat{\beta}(x,t)+ \int_0^x (\lambda_2 \Tilde{K}^w_x(x,\xi) + \lambda_1 \Tilde{K}^w_{\xi}(x,\xi))\Tilde{w}(\xi,t) d\xi}dx \leq \frac{2a\epsilon(\lambda_1+\lambda_2)L}{m_1 \lambda_2}V_k(t),\\
    & \int_0^L \norm{2a\frac{\mathrm{e}^{-\frac{\nu}{\lambda_2}x}}{\lambda_2}\Hat{\beta}(x,t)+ \int_0^x (\lambda_2 \Tilde{K}^v_x(x,\xi) + \lambda_2  \Tilde{K}^v_{\xi}(x,\xi))\Tilde{v}(\xi,t) d\xi}dx \leq \frac{2a\epsilon L}{m_1}V_k(t),
\end{align}
thus we get the following Lyapunov candidate,
{\setlength\abovedisplayskip{0.1cm}
\setlength\belowdisplayskip{0.1cm}
\begin{align}
    \Dot{V}_k(t) \leq - \eta V_k(t) + \left( r^2-a+2aL\epsilon \right)\Hat{\beta}^2(0,t) - \mathrm{e}^{-\frac{\nu}{\lambda_1}L}\Hat{\alpha}^2(L,t),
\end{align}}
where $\eta = \nu - \frac{2a\epsilon(1+L)(2\lambda_2 + \lambda_1)}{m_1\lambda_2}$. The coefficients $\nu, \epsilon, a$ are chosen such that
{\setlength\abovedisplayskip{0.1cm}
\setlength\belowdisplayskip{0.1cm}
\begin{align}
    \eta >0, r^2-a+2aL\epsilon < 0.
\end{align}}
So we get the following result:
{\setlength\abovedisplayskip{0.1cm}
\setlength\belowdisplayskip{0.1cm}
\begin{align}
    \Dot{V}_k(t) &\leq -\eta V_k(t) \rightarrow V_k(t) \leq V(0)\mathrm{e}^{-\eta t}
\end{align}}
Using the equivalent norm of the Lyapunpov functional, we have:
{\setlength\abovedisplayskip{0.1cm}
\setlength\belowdisplayskip{0.1cm}
\begin{align}
    ||(\Tilde{w},\Tilde{v})||^2_{L^2} \leq \mathrm{e}^{-\eta t}\frac{m_2}{m_1} ||(\Tilde{w}(x,0),\Tilde{v}(x,0))||^2_{L^2}.
\end{align}}
The exponential stability thus is proved.
\end{proof}
\vspace{-0.5cm}
\section{NO-approximated backstepping control law}
Based on the previous section, we have proved that the PDE system is exponentially stable with the NO-approximated kernels. In this section, we consider whether the neural operator can directly approximate the mapping from the characteristic speed to control law \eqref{control_bs} rather than approximating the backstepping kernels. The form of stability we get in this section is also only achieved in the practical sense using the trained neural operator. Different with the results in \cite{bhan2023neural}, we train the mapping only from the characteristic speed $\lambda_2$, while in \cite{bhan2023neural} they trained the mapping from the parameter $\beta(x)$ and the system state $u(x,t)$ at its corresponding time step to the backstepping control law. It costs more training time due to the number of model parameters.

Recalling the control law \eqref{control_bs}, we define the operator mapping $\mathcal{H}\left(\lambda_2\right): \mathbb{R}^+ \rightarrow \mathbb{R} $ from  $\lambda_2$ to $U(t)$. From the expression of backstepping control law \eqref{control_bs}, it is shown that there is no explicit form for the mapping from $\lambda_2$ to $U(t)$. The relation between the $\lambda_2$ and $U(t)$ is characterized with the kernel equations \eqref{ker1} - \eqref{ker4}. The control law for mapping is
{\setlength\abovedisplayskip{0.1cm}
\setlength\belowdisplayskip{0.1cm}
\begin{align}\label{controlmu2c}
    U(t) = \mathcal{H}\left(\lambda_2\right)(L, t),
\end{align}}and the NO-approximated mapping for $\mathcal{H}\left(\lambda_2\right): \mathbb{R}^+ \rightarrow \mathbb{R} $ is defined as $\mathcal{\Hat{H}}(\lambda_2): \mathbb{R}^+ \rightarrow \mathbb{R}$. It can be found that the NO-approximated mapping $\mathcal{\Hat{H}}(\lambda_2)$ has the following lemma based on Theorem \ref{Deeptheo},
\begin{lemma}
    For $\epsilon > 0$,  there exists a neural operator $\mathcal{\Hat{H}}(\lambda_2)$ that can approximate the control law mapping in the spatial-temporal domain $(x,t) \in [0,L]\times \mathbb{R}^+$:
    {\setlength\abovedisplayskip{0.1cm}
\setlength\belowdisplayskip{0.1cm}
    \begin{align}
        \left| \mathcal{H}\left(\lambda_2\right)(L) - \mathcal{\Hat{H}}(\lambda_2)(L) \right| < \epsilon.
    \end{align}}
\end{lemma}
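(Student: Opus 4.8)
The plan is to obtain this lemma as a corollary of the DeepONet universal approximation theorem (Theorem~\ref{Deeptheo}), so the only genuine work is to verify that the operator $\mathcal{H}:\mathbb{R}^+\to\mathbb{R}$ (more precisely, into the space of traces of the control signal at $x=L$) is continuous and that its domain may be restricted to a compact set.

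First I would fix a compact interval $\Lambda\subset\mathbb{R}^+$ containing the physically admissible values of $\lambda_2=-\rho^\star V'(\rho^\star)-v^\star$, which is bounded because $\rho^\star\in(0,\rho_m)$ and $v^\star\in(0,v_f)$ range over bounded sets. On $\Lambda$ I would invoke the well-posedness result of \cite{vazquez2011backstepping} already used in Lemma~\ref{NO-K}: for each $\lambda_2\in\Lambda$ the Goursat system \eqref{ker1}--\eqref{ker4} has a unique solution $(K^w,K^v)\in C^1(\mathcal{T})\times C^1(\mathcal{T})$. That solution is built by transforming \eqref{ker1}--\eqref{ker4} along characteristics into a system of integral equations solved by successive approximations; since $\lambda_1$ and $c(x)$ are fixed and the only parameter dependence is the explicit, continuous (indeed affine) appearance of $\lambda_2$, the contraction constant and the data of that fixed-point map depend continuously on $\lambda_2$. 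Hence the kernel map $\mathcal{Q}:\lambda_2\mapsto(K^w,K^v)$ is continuous from $\Lambda$ into $C^1(\mathcal{T})^2$ --- this is exactly the continuity that Lemma~\ref{NO-K} already presupposes in applying Theorem~\ref{Deeptheo}.

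Next I would compose. By \eqref{control_bs}, $\mathcal{H}(\lambda_2)(L,t)=\int_0^L K^w(L,\xi)\Tilde{w}(\xi,t)\,d\xi+\int_0^L K^v(L,\xi)\Tilde{v}(\xi,t)\,d\xi$ is obtained from $\mathcal{Q}(\lambda_2)$ by restricting the kernels to the edge $x=L$ and pairing them against the given state profile, i.e.\ by applying a bounded linear functional whose norm is controlled by $\|(\Tilde{w},\Tilde{v})\|_{L^2}$. A composition of the continuous map $\mathcal{Q}$ with the bounded trace-and-integration operator is continuous, so $\mathcal{H}$ is continuous on $\Lambda$, with values in $\mathbb{R}$ (or in $C([0,T])$ if the time variable is retained). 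Finally I would apply Theorem~\ref{Deeptheo} with $X=\Lambda$, $Y=\{L\}$ (respectively $Y=\{L\}\times[0,T]$), $\mathcal{U}$ the set of constant functions, and $\mathcal{V}=\mathbb{R}$: since $\mathcal{H}$ is continuous, for every $\epsilon>0$ there exist branch and trunk networks realizing $\mathcal{\Hat{H}}$ with $\left|\mathcal{H}(\lambda_2)(L)-\mathcal{\Hat{H}}(\lambda_2)(L)\right|<\epsilon$ for all $\lambda_2\in\Lambda$, which is the claim.

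I expect the continuous-dependence step --- showing that $\lambda_2\mapsto(K^w,K^v)$ is continuous in the $C^1(\mathcal{T})$ topology --- to be the main obstacle, since it requires reopening the successive-approximation argument of \cite{vazquez2011backstepping} and tracking the parameter dependence of the fixed point together with its first derivatives. Everything else (compactness of the $\lambda_2$-range, boundedness of the integration functional, and the final appeal to Theorem~\ref{Deeptheo}) is routine.
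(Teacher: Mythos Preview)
Your approach is correct and aligned with the paper: the paper does not give a formal proof of this lemma at all, merely asserting that it holds ``based on Theorem~\ref{Deeptheo}'' (the DeepONet universal approximation theorem), in the same spirit as the brief proof of Lemma~\ref{NO-K}. Your proposal makes explicit the steps the paper leaves implicit --- restricting $\lambda_2$ to a compact interval, establishing continuity of $\lambda_2\mapsto(K^w,K^v)$ and then of $\mathcal{H}$, and finally invoking Theorem~\ref{Deeptheo} --- so you are supplying rigor the paper omits rather than taking a different route.
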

Applying the NO-approximated control law to the system, we get the target system as:
{\setlength\abovedisplayskip{0.1cm}
\setlength\belowdisplayskip{0.1cm}
\begin{align}
    \partial_t \Check{\alpha}(x,t) + \lambda_1 \partial_x\Check{\alpha}(x,t) &= 0, \label{nocontrol-q}\\
    \partial_t \Check{\beta}(x,t) - \lambda_2 \partial_x \Check{\beta}(x,t) &= 0,\\
    \Check{\alpha}(0,t) &= -r\Check{\beta}(0,t),\\
    \Check{\beta}(L,t) &= \mathcal{H}\left(\lambda_2\right)(L, t)  - \mathcal{\Hat{H}}(\lambda_2)(L, t). \label{nocontrol-bc_l}
\end{align}}
Compared with the target system in \cite{yu2019traffic}, the system \eqref{nocontrol-q} - \eqref{nocontrol-bc_l} is not strictly exponential stable due to the approximated error of NO-control law. We have the following theorem for the NO-approximated control law,
\begin{theorem}
    The system \eqref{bs-q} - \eqref{bs-bc_l} is locally practically exponentially stable under the control law \eqref{controlmu2c} with initial conditions $\Tilde{w}(x,0),\Tilde{v}(x,0)$, the kernels are the same as \eqref{ker1} - \eqref{ker4}, such that 
    {\setlength\abovedisplayskip{0.1cm}
\setlength\belowdisplayskip{0.1cm}
    \begin{align}
        ||(\Tilde{w},\Tilde{v})||^2_{L^2} \leq \frac{k_2}{k_1}\mathrm{e}^{-\nu t} ||(\Tilde{w}(x,0),\Tilde{v}(x,0))||^2_{L^2} + \frac{a}{k_1} \mathrm{e}^{-\frac{\nu}{\lambda_2}L}{\epsilon^2}.
    \end{align}}
    
\end{theorem}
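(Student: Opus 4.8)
The plan is to mirror the weighted-$L^2$ Lyapunov argument of Theorem~\ref{es-NO-k}, exploiting the structural simplification that here the backstepping transformation \eqref{back-1}--\eqref{back-2} uses the \emph{exact} kernels \eqref{ker1}--\eqref{ker4}, so the operator-approximation error no longer enters the PDE domain but appears only in the boundary condition \eqref{nocontrol-bc_l} at $x=L$. First I would record that the exact transformation is a bounded, boundedly invertible Volterra operator (existence and invertibility from \cite{vazquez2011backstepping}), so $(\check\alpha,\check\beta)$ and $(\tilde w,\tilde v)$ have equivalent $L^2$ norms; this supplies constants $k_1,k_2>0$ with $k_1\|(\tilde w,\tilde v)\|_{L^2}^2\le V(t)\le k_2\|(\tilde w,\tilde v)\|_{L^2}^2$ for the functional $V$ introduced below. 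I would also note that the target system \eqref{nocontrol-q}--\eqref{nocontrol-bc_l} is well posed: it is a cascade of two transport equations driven at $x=L$ by $\mathcal H(\lambda_2)(L,t)-\hat{\mathcal H}(\lambda_2)(L,t)$, whose magnitude is at most $\epsilon$ by the preceding approximation lemma for $\hat{\mathcal H}$.

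Second, I would take the Lyapunov candidate of the same form as in Theorem~\ref{es-NO-k},
\[
V(t)=\int_0^L \frac{\mathrm e^{-\frac{\nu}{\lambda_1}x}}{\lambda_1}\,\check\alpha^2(x,t)\,dx+a\int_0^L \frac{\mathrm e^{-\frac{\nu}{\lambda_2}x}}{\lambda_2}\,\check\beta^2(x,t)\,dx,\qquad \nu>0,\ a>0,
\]
differentiate it along \eqref{nocontrol-q}--\eqref{nocontrol-bc_l}, substitute the transport dynamics, and integrate by parts. Because the right-hand sides of the $\check\alpha$- and $\check\beta$-equations vanish (exact kernels), the distributed terms reduce to $-\nu V(t)$, exactly as in the nominal proof and without the Young and Cauchy--Schwarz estimates that were needed in Theorem~\ref{es-NO-k}. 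The boundary contributions at $x=0$ combine through the reflection condition $\check\alpha(0,t)=-r\check\beta(0,t)$ into $(r^2-a)\,\check\beta^2(0,t)$, while those at $x=L$ produce $-\mathrm e^{-\frac{\nu}{\lambda_1}L}\check\alpha^2(L,t)$ together with a term proportional to $\check\beta^2(L,t)$; by \eqref{nocontrol-bc_l} and the lemma, $\check\beta^2(L,t)=\bigl(\mathcal H(\lambda_2)(L,t)-\hat{\mathcal H}(\lambda_2)(L,t)\bigr)^2\le\epsilon^2$, so this term is bounded by $a\,\mathrm e^{-\frac{\nu}{\lambda_2}L}\epsilon^2$.

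Third, I would fix $a\ge r^2$ so that $(r^2-a)\check\beta^2(0,t)\le0$, discard the non-positive $-\mathrm e^{-\frac{\nu}{\lambda_1}L}\check\alpha^2(L,t)$, and obtain the differential inequality $\dot V(t)\le-\nu V(t)+a\,\mathrm e^{-\frac{\nu}{\lambda_2}L}\epsilon^2$. The comparison lemma then gives $V(t)\le\mathrm e^{-\nu t}V(0)+\mathcal O\!\left(\mathrm e^{-\frac{\nu}{\lambda_2}L}\epsilon^2\right)$ (bounding $1-\mathrm e^{-\nu t}\le1$ and absorbing the $1/\nu$ factor into the constant $a$), and combining with $V(0)\le k_2\|(\tilde w(\cdot,0),\tilde v(\cdot,0))\|_{L^2}^2$ and $k_1\|(\tilde w,\tilde v)\|_{L^2}^2\le V(t)$ yields precisely
\[
\|(\tilde w,\tilde v)\|_{L^2}^2\le\frac{k_2}{k_1}\,\mathrm e^{-\nu t}\|(\tilde w(\cdot,0),\tilde v(\cdot,0))\|_{L^2}^2+\frac{a}{k_1}\,\mathrm e^{-\frac{\nu}{\lambda_2}L}\epsilon^2,
\]
i.e.\ practical exponential stability of the linearized system. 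Finally, since \eqref{bs-q}--\eqref{bs-bc_l} is the linearization of the ARZ system \eqref{origin1}--\eqref{bc_v} about $(\rho^\star,v^\star)$, the localization argument used for the nominal closed loop of \cite{yu2019traffic} and in Theorem~\ref{es-NO-k} upgrades this to the claimed local practical exponential stability of the original nonlinear model.

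I expect the argument to be mostly routine given Theorem~\ref{es-NO-k} — indeed easier, since the operator error is confined to a single boundary value rather than spread across the domain, so no distributed error estimates are required. The points that need care are: (i) the orientation and sign bookkeeping in the weighted integration by parts, so that the leftward transport of $\check\beta$ still contributes $-\nu$ to $\dot V$ and the only surviving inhomogeneity is the boundary forcing of order $\mathrm e^{-\frac{\nu}{\lambda_2}L}\epsilon^2$; (ii) verifying that $\nu$, $a$ and the residual constant can be chosen consistently with $a\ge r^2$ and $\nu>0$; and (iii) observing that the residual does not decay in $t$, which is exactly why only \emph{practical}, rather than asymptotic, exponential stability can be asserted.
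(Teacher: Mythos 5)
Your proposal is correct and follows essentially the same route as the paper: the identical weighted Lyapunov functional $V_U$, the same integration by parts yielding $-\nu V_U+(r^2-a)\check\beta^2(0,t)+a\mathrm{e}^{-\frac{\nu}{\lambda_2}L}\check\beta^2(L,t)-\mathrm{e}^{-\frac{\nu}{\lambda_1}L}\check\alpha^2(L,t)$, the choice $a\ge r^2$, the bound $\check\beta^2(L,t)\le\epsilon^2$ from the approximation lemma, and the norm-equivalence constants $k_1,k_2$ to pass back to $(\tilde w,\tilde v)$. Your remark about the $1/\nu$ factor from the comparison lemma is in fact slightly more careful than the paper, which writes the residual constant as $a\mathrm{e}^{-\frac{\nu}{\lambda_2}L}$ directly without that factor.
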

\vspace{-0.15 cm}
\begin{proof}
    For the NO-approximated control law, using the Lyapunov candidate again to analyze the stability of the target system \eqref{nocontrol-q} - \eqref{nocontrol-bc_l}. 
    {\setlength\abovedisplayskip{0.1cm}
\setlength\belowdisplayskip{0.1cm}
\begin{align}
    V_U(t) = \int_0^L \frac{\mathrm{e}^{-\frac{\nu}{\lambda_1}x}}{\lambda_1}\Check{\alpha}^2(x,t) + a\frac{\mathrm{e}^{-\frac{\nu}{\lambda_2}x}}{\lambda_2} \Check{\beta}^2(x,t) dx,
\end{align}}
where the coefficients $\nu$ and $a$ are the same with before. The Lyapunov functional also has the following equivalent norm with $k_1>0,k_2 >0$
{\setlength\abovedisplayskip{0.1cm}
\setlength\belowdisplayskip{0.1cm}
\begin{align}\label{eqnormVu}
    k_1||(\Tilde{w},\Tilde{v})||^2_{L^2} \leq V_U(t) \leq k_2||(\Tilde{w},\Tilde{v})||^2_{L^2}.
\end{align}}
Taking time derivative along the trajectories, putting into the system dynamics, integrating by parts, we have:
{\setlength\abovedisplayskip{0cm}
\setlength\belowdisplayskip{0.1cm}
\begin{align}
    \Dot{V}_U(t) = -\nu {V}_U(t) + (r^2 - a)\Check{\beta}^2(0,t) + a\mathrm{e}^{-\frac{\nu}{\lambda_2}L}\Check{\beta}^2(L,t) -\mathrm{e}^{-\frac{\nu}{\lambda_1}L}\Check{\alpha}^2(L,t).
\end{align}}
The term $a\mathrm{e}^{-\frac{\nu}{\lambda_2}L}\Check{\beta}^2(L,t)$ is equal to 0 in the ideal situation when the NO-approximated mapping achieves $100\%$ accuracy of approximation which means that $\mathcal{H}\left(\lambda_2\right)(L) - \mathcal{\Hat{H}}(\lambda_2)(L) = 0$ and we can easily get the exponential stability for the system. Here the mapping has the error $\epsilon$. We take the $r^2 -a \leq 0$, and we get
{\setlength\abovedisplayskip{0.1cm}
\setlength\belowdisplayskip{0.1cm}
\begin{align}
   V_U(t) \leq V_U(0) \mathrm{e}^{-\nu t} + a \mathrm{e}^{-\frac{\nu}{\lambda_2}L}\sup_{0 \leq \varsigma \leq t} (\mathcal{H}\left(\lambda_2\right)(L) - \mathcal{\Hat{H}}(\lambda_2)(L))^2(L,\varsigma).
\end{align}}
Using \eqref{eqnormVu}, we have
{\setlength\abovedisplayskip{0.1cm}
\setlength\belowdisplayskip{0.1cm}
\begin{align}
    ||(\Tilde{w},\Tilde{v})||^2_{L^2} \leq \frac{k_2}{k_1}\mathrm{e}^{-\nu t} ||(\Tilde{w}(x,0),\Tilde{v}(x,0))||^2_{L^2} + \frac{a}{k_1} \mathrm{e}^{-\frac{\nu}{\lambda_2}L}\epsilon^2.
\end{align}}
Thus we have proved that the system \eqref{bs-q} - \eqref{bs-bc_l} is locally practically exponentially stable.
\end{proof}

\section{Experiments}
In this section, we present and analyze the performance of the proposed neural operator controllers for the ARZ traffic PDE system, and also provide two comparisons with model-based controllers: (i) backstepping controller (ii) PI controller.

We first test the efficacy of the controller with \textit{NO-approximated backstepping kernels}.
To train the neural operator, we use numerical simulations to get data for training. We run the simulation on a $L=500 \text{m}$ long road and the simulation time is $T=300\text{s}$. The free-flow velocity is $v_m = 40 \text{m/s}$, the maximum density is $\rho_m = 160 \text{veh/km}$, the equilibrium density is selected as $\rho^\star = 120 \text{veh/km}$, the reaction time for the drives adapting to speed is $\tau = 60 \text{s}$, and $\gamma = 1$. For the initial conditions, we take the sinusoidal inputs as $\rho(x,0) = \rho^\star + 0.1 \sin \left( \frac{3\pi x}{L}\right)\rho^\star$, $v(x,0) = v^\star - 0.1\sin \left( \frac{3\pi x}{L}\right)v^\star$ to mimic the stop-and-go traffic wave.
To get enough data for training, we use 900 different values for $\rho^\star \in [90 \text{veh/km},130\text{veh/km}]$ to get the different value for $\lambda_2 \in [5,25]$ and different kernels $K^w(x,\xi), K^v(x,\xi)$. And we train the model on an Nvidia RTX 4090Ti GPU.  Using the trained neural operator model, we run the simulation with the same parameters. The results for backstepping controller and NO-approximated controller is shown in Fig. \ref{bs-res} and \ref{mu2k-res}. The blue line denotes the initial condition while the red line represents the boundary condition of the system. It can be found that the neural operator can still stabilize the traffic system compared with the backstepping control method. The traffic density and velocity all converge to their equilibrium point $\rho^\star = 120 \text{veh/km}$, $v^\star = 36 \text{m/s}$. The error between the closed-loop result of the backstepping controller and the approximated one is shown in Fig. \ref{mu2k-err}. 
\begin{figure}[t]
\centering
\subfigure[Density]{\includegraphics[width=0.45\linewidth]{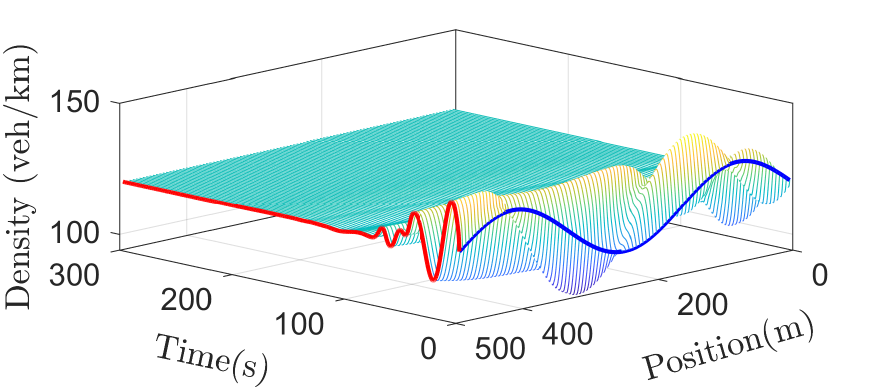}}
\subfigure[Velocity]{\includegraphics[width=0.45\linewidth]{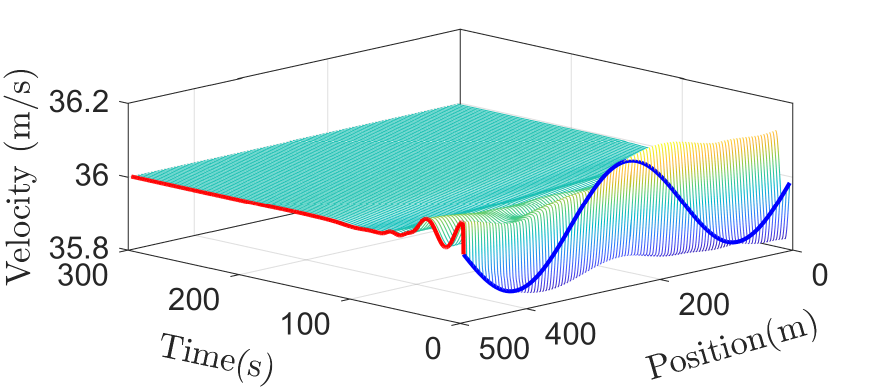}}
\caption{Backstepping controller for closed-loop system}
\label{bs-res}
\end{figure}
\begin{figure}[t]
\centering
\subfigure[Density]{\includegraphics[width=0.45\linewidth]{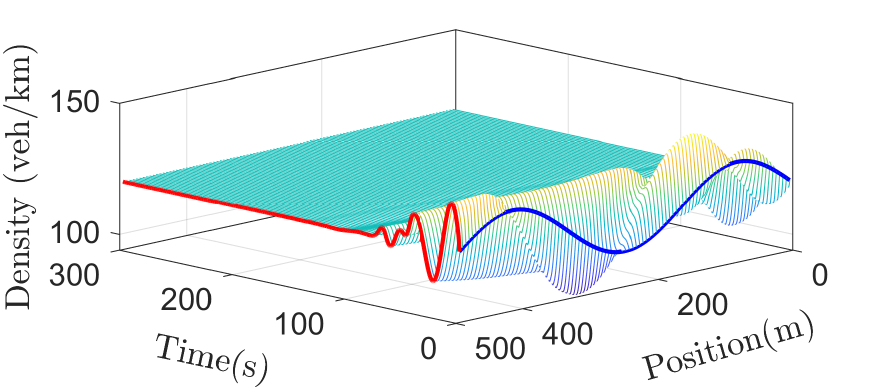}}
\subfigure[Velocity]{\includegraphics[width=0.45\linewidth]{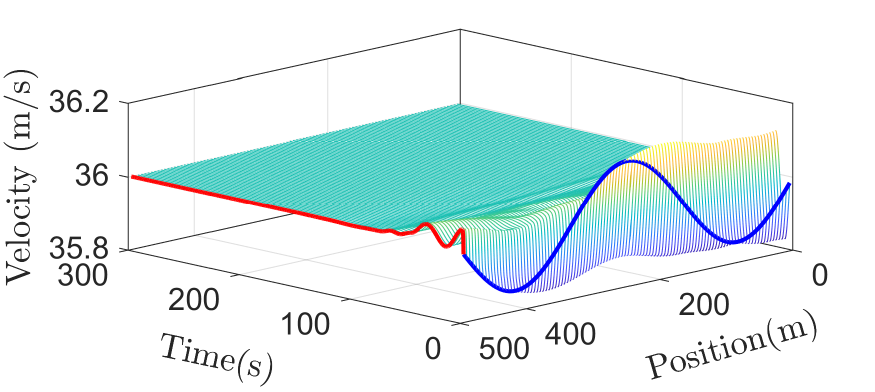}}
\caption{Neural operator for $\lambda_2 \rightarrow  K^w(x,\xi), K^v(x,\xi)$}
\label{mu2k-res}
\end{figure}
\begin{figure}[t]
\centering
\subfigure[Density]{\includegraphics[width=0.45\linewidth]{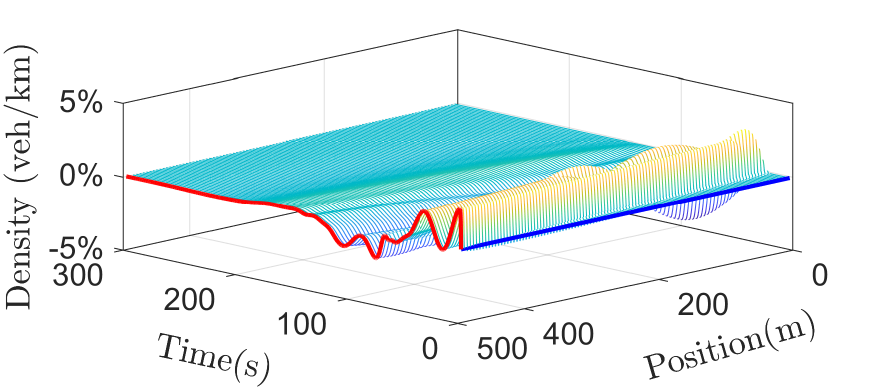}}\hfil
\subfigure[Velocity]{\includegraphics[width=0.45\linewidth]{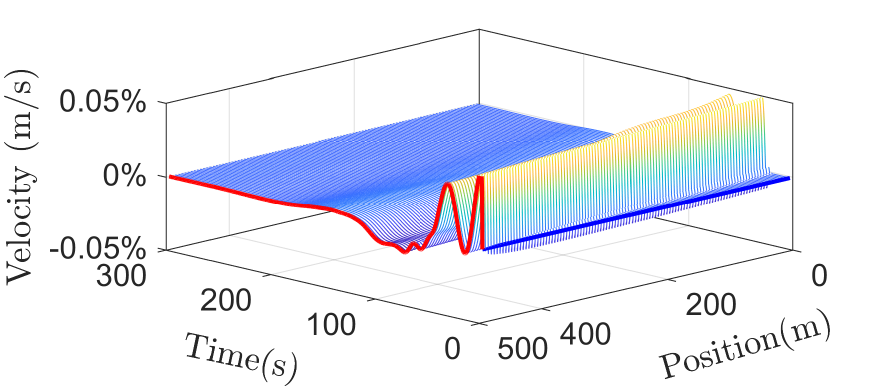}}
\caption{Error between backstepping control and neural operator for kernels}
\label{mu2k-err}
\end{figure}


We then test the \textit{direct NO-approximated controller}. Using the same parameter settings in previous section to get the training data including 900 instance. And we get the result for approximated control law mapping is shown in Fig. \ref{mu2con-res}. We also set the backstepping controller as the baseline to evaluate the performance of the neural operator. The error between the approximated control law and backstepping controller is shown in Fig. \ref{err-mu2control}. It is shown that the density and velocity do not converge to their equilibrium point. The density and velocity error exist in the whole simulation time. This is reasonable because the neural operator only learns the mapping from $\lambda_2$ to $U(t)$. As we know, the control law in \eqref{control_bs} are composed with two parts, one is the backstepping kernels another is the system states at the current time step.


\begin{figure}[tbp]
\centering
\subfigure[Density]{\includegraphics[width=0.45\linewidth]{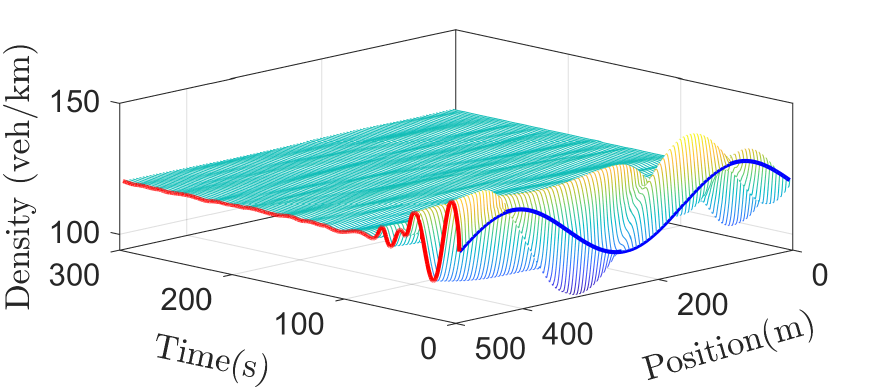}}
\subfigure[Velocity]{\includegraphics[width=0.45\linewidth]{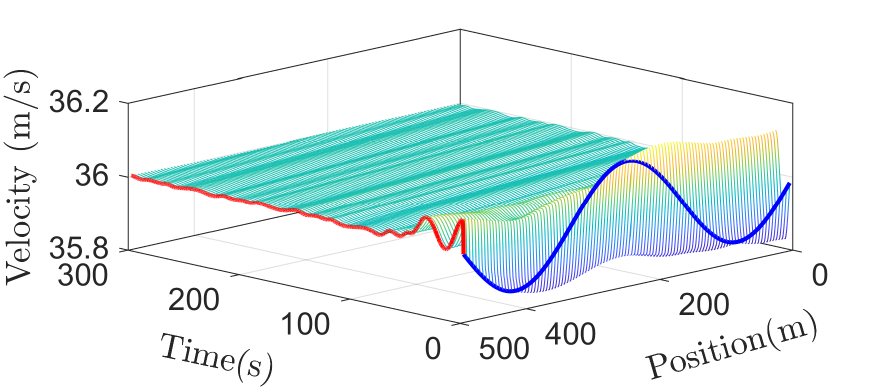}}
\caption{Neural operator for $\lambda_2 \rightarrow U(t)$}
\label{mu2con-res}
\end{figure}
\begin{figure}[t]
\centering
\subfigure[Density]{\includegraphics[width=0.45\linewidth]{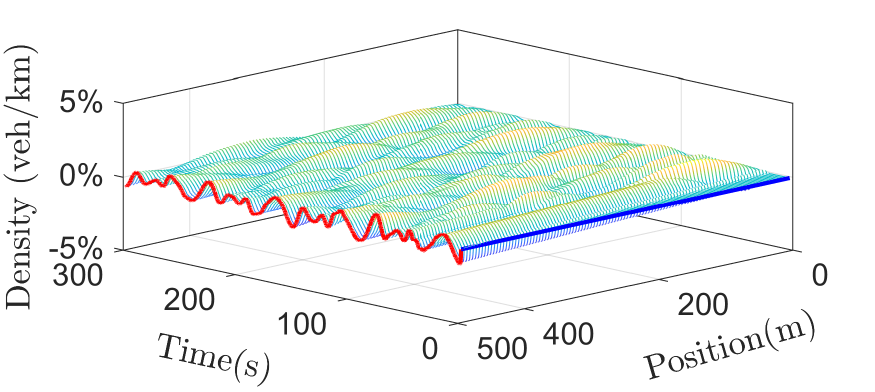}}\hfil
\subfigure[Velocity]{\includegraphics[width=0.45\linewidth]{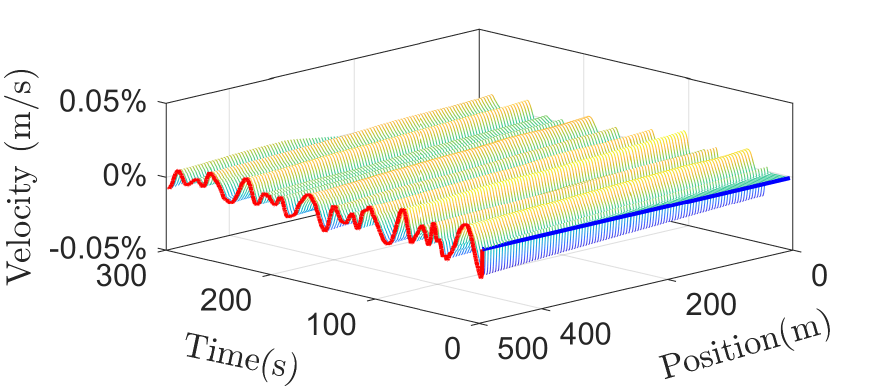}}
\caption{Error between backstepping control and neural operator for control law}
\label{err-mu2control}
\end{figure}

We also provide a comparison with the PI controller. It was shown that the PI boundary control can also stabilize the traffic system \cite{zhang2019pi}. We also apply the PI controller at the outlet of the road section as $\Tilde{v}(L,t) = U_{PI}(t)$. The control law is given as:
{\setlength\abovedisplayskip{0cm}
\setlength\belowdisplayskip{0.12cm}
\begin{align}
    U_{PI}(t) = v^\star + k_p^v (v(0,t) - v^\star) + k_i^v \int_0^t (v(0,t) - v^\star) ds,
\end{align}}where $k_p^v$, $k_i^v$ are tuning gains. The results for PI control are shown in Fig. \ref{PI-res}.
\begin{figure}[t]
\centering
\subfigure[Density]{\includegraphics[width=0.45\linewidth]{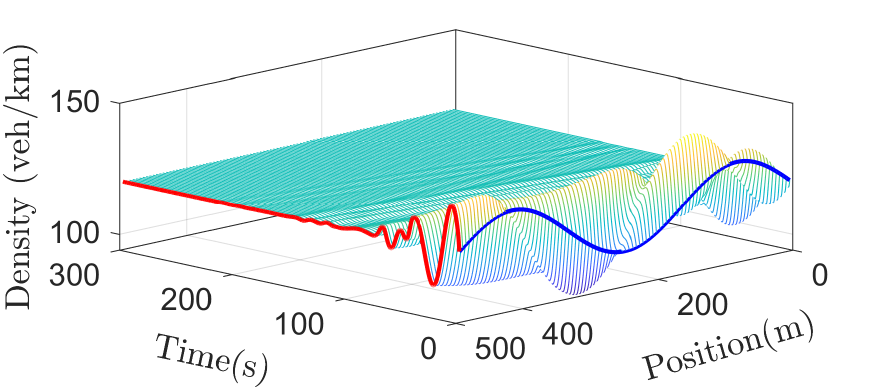}}\hfil
\subfigure[Velocity]{\includegraphics[width=0.45\linewidth]{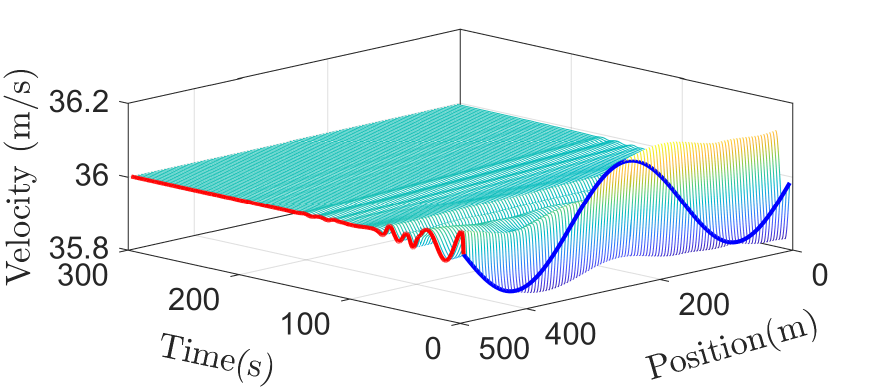}}
\vspace{-0.3 cm}
\caption{PI controller}
\vspace{-0.4 cm}
\label{PI-res}
\end{figure}
\begin{figure}[h!]
\centering
\subfigure[Control input $U(t)$]{\includegraphics[width=0.45\linewidth]{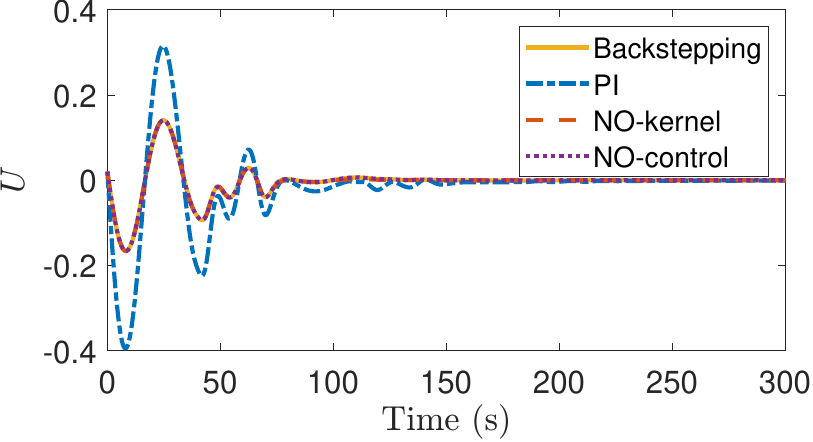}}\hfil
\subfigure[Norm of the states]{\includegraphics[width=0.43\linewidth]{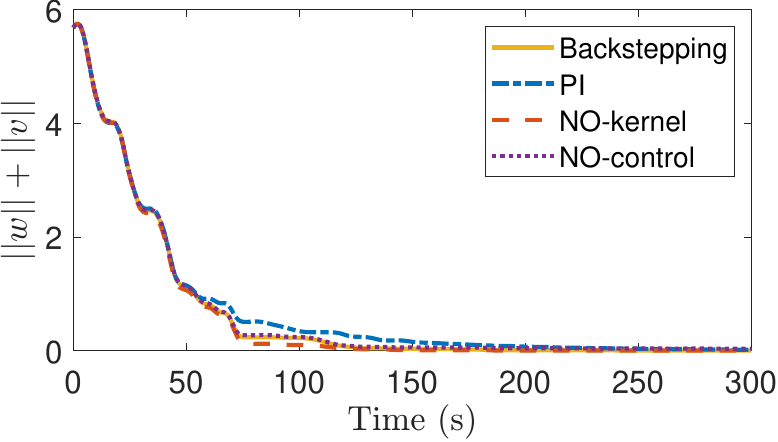}}
\vspace{-0.3 cm}
\caption{Comparison of $U(t)$ and states norm}
\label{compare-U}
\vspace{-0.3  cm}
\end{figure}
For the control law mapping, we give the comparisons of control law and norm of states, which are shown in Fig. \ref{compare-U}. From the results of $U(t)$,  the neural operator can approximate the backstepping control law while the PI controller needs more control effort to stabilize the traffic system. All the four controllers are eventually stabilize the system. However, the norm of the states of PI controller converges to zero slower than other three controllers. It is shown that the NO-approximated kernels and directly approxmated control law achieves satisfying closed-loop results.

The computation time of neural operator, backstepping controller, PI controller  are shown in Tab. \ref{Tab-res-1}. We set the bacsktepping control method as the baseline of the system. It is can be found that the average computation time of NO-approximated methods are faster than backsteping controller and PI controller which gives the possibility of accelerating the online application in real traffic system. The NO-approximated methods are also has lower relative error compared with PI controller with a faster computation speed. 
\vspace{-0.3 cm}
\begin{table}[tbp!]
    \centering
    \begin{tabular}{|c|c|c|}
    \hline
       \textbf{Method}  & \makecell[c]{\textbf{Average} \\ \textbf{Computation Time}} & \makecell[c]{\textbf{Average} \\ \textbf{$L_2$ Error}} \\
       \hline
       Backstepping controller  & 0.0847s & 0 \\
       \hline
       NO-approximated kernels &  0.0021s & 0.0317 \\
       \hline
        NO-approximated control law & 0.0012s & 0.0266\\
       \hline
       PI controller &  0.0033s & 0.0936 \\
       \hline
    \end{tabular}
    \vspace{-0.2 cm}
    \caption{The computation time and average $L_2$ error of different controllers and neural operators}
    \label{Tab-res-1}

\end{table}



\section{Conclusion}
In this paper, we proposed an operator learning framework for boundary control of traffic systems. The ARZ PDE model is adopted to describe the spatial-temporal evolution of traffic density and velocity. We define first the operator mapping from the model parameter, i.e., characteristic speed to backstepping control kernels  and then the mapping directly to a boundary control law . The neural operators using DeepONet are trained to approximate the two mappings. We also derived the theoretical stability for the NO-approximated closed-loop system using Lyapunov analysis. The simulation results showed that the NO-approximated mappings have satisfying accuracy and significantly accelerate the computation process. One of the future work is to incorporate real traffic data into training of neural operator.


\bibliography{reference}

\end{document}